\newtheorem{theorem}{Theorem}
\newtheorem{lemma}{Lemma}
\def\phiprior{\phi_K^{\text{prior}}}
\def\phipost{\hat{\phi}_K}
\DeclarePairedDelimiterX{\infdivx}[2]{(}{)}{%
	#1\;\delimsize\|\;#2%
}
\newcommand{\KL}{\mathbb{K}\mathbb{L}\infdivx}
\icmltitlerunning{MFVI BNNs ignore the data}
\begin{document}

\twocolumn[
\icmltitle{Wide Mean-Field Variational Bayesian Neural Networks Ignore the Data}



\icmlsetsymbol{equal}{*}

\begin{icmlauthorlist}
\icmlauthor{Beau Coker}{bio}
\icmlauthor{Weiwei Pan}{seas}
\icmlauthor{Finale Doshi-Velez}{seas}
\end{icmlauthorlist}

\icmlaffiliation{seas}{John A. Paulson School of Engineering and Applied Sciences, Harvard University, Cambridge, MA, USA}
\icmlaffiliation{bio}{Department of Biostatistics, Harvard T.H. Chan School of Public Health, Boston, MA, USA}

\icmlcorrespondingauthor{Beau Coker}{beaucoker@g.harvard.edu}

\icmlkeywords{Machine Learning, ICML}

\vskip 0.3in
]



\printAffiliationsAndNotice{\icmlEqualContribution} 

\begin{abstract}
Variational inference enables approximate posterior inference of the highly over-parameterized neural networks that are popular in modern machine learning. Unfortunately, such posteriors are known to exhibit various pathological behaviors. We prove that as the number of hidden units in a single-layer Bayesian neural network tends to infinity, the function-space posterior mean under mean-field variational inference actually converges to zero, completely ignoring the data. This is in contrast to the true posterior, which converges to a Gaussian process. Our work provides insight into the over-regularization of the KL divergence in variational inference. 
\end{abstract}

\section{Introduction}

Bayesian neural networks (BNNs) provide principled notions of uncertainty in deep learning, but they have not been widely adopted in practice as many properties of this model and its inference remain poorly understood, particularly in the overparameterized, nearly nonparametric regime where modern deep learning take often places. One tool for understanding the behavior of BNNs with a large number of parameters is to take the limit as the number of hidden units (i.e., \emph{width}) goes to infinity. In this case, the prior predictive distribution of a single-layer BNN converges in distribution to the \textit{NNGP}, a Gaussian process with the \emph{neural network kernel} that depends on the prior and architecture of the network \citep{neal_1996}. Extensions of this result exist also for deep networks  \citep{matthews_2018} and for BNN posteriors  \citep{hron_2020}. 

In contrast, asymptotic properties of popular variational approximations of BNN posteriors have not been extensively studied.  In the finite width regime, variational posterior approximations are known to have deficiencies.  For example, the commonly used mean-field variational posterior, which ignores correlations between parameters, exhibits various pathologies including poor uncertainty estimates between data rich regions \citep{foong_2019_2}. Moreover, recent works have noted the tendency of mean-field variational BNN posteriors to underfit even with large network architectures \citep{dusenberry2020efficient}. It is therefore natural to ask if the deficiencies in mean-field variational approximations of finite width BNN posteriors persist in approximations of infinite width BNN posteriors.

In this paper, we show that the answer, unfortunately, is \emph{yes}. For single-layer Bayesian neural networks used for univariate regression, we prove a surprising result: the posterior predictive mean under mean-field variational inference converges to zero (assuming the observed outcomes are centered) as the number of hidden units tends to infinity. That is, the mean-field variational posterior predictive mean completely ignores the data, unlike the true posterior predictive of an infinite width BNN. Furthermore, we provide insight on the cause of this failure --- we show that this results from the over-regularizing effect of the KL divergence term (forcing the posterior to match the zero-mean prior) in the variational inference objective. 


\section{Background}

\paragraph{Bayesian neural networks (BNNs)}

A single-layer feedforward neural network with $K$ hidden units used for univariate regression is given by:
\begin{equation}
	f(x, \theta) = \sum_{k=1}^K w^{(2)}_k \psi(w^{(1)T}_k x + b^{(1)}_k )
	\label{eq:bnn}
\end{equation}
where $\psi$ is a nonlinear activation function, $x\in \mathbb{R}^D$ is an input, $w^{(1)}_k \in \mathbb{R}^D$ and $b^{(1)}_k \in \mathbb{R}$ are input-layer weight and bias parameters, respectively, and $w^{(2)}_k \in \mathbb{R}$ is an output-layer weight parameter. For simplicity, we assume the mean has been subtracted from the observed outcomes so that we can omit an output-layer bias parameter. We let $\theta \in \mathbb{R}^{K(D+2)}$ denote the collection of all model parameters. Given independent and identically distributed observations $\mathcal{D} = \{(x_n, y_n)\}_{n=1}^N$, we assume a Gaussian likelihood function $\mathcal{L}(\theta) = \prod_n \mathcal{N}(y_n \mid f(x_n, \theta), \sigma^2_\text{noise})$ and infer the parameters $\theta$ by maximizing the likelihood.

A \textit{Bayesian neural network} (BNN) places a prior distribution $p(\theta)$ on the parameters, typically a factorized Gaussian:
\begin{align}
	w^{(1)}_k&\sim \mathcal{N}(0, \sigma_{w^{(1)}}^2 I_D)
	\label{eq:prior1}
	\\
	b^{(1)}_k &\sim \mathcal{N}(0, \sigma_{b^{(1)}}^2)
	\label{eq:prior2}
	\\
	w^{(2)}_k &\sim \mathcal{N}(0, \sigma_{w^{(2)}}^2),
	\label{eq:prior3}
\end{align}
where $I_D$ is a $D\times D$ identity matrix and $k\in \{1,\dotsc,K\}$.

\paragraph{Convergence to Gaussian processes} 
Under this prior, the covariance between the function $f$ evaluated at two inputs $x$ and $x'$ is given by:
\begin{align*}
	K  \sigma_{w^{(2)}}^2 \mathbb{E}[\psi(w^{(1)T}_k x + b^{(1)}_k) \psi(w^{(1)T}_k x' + b^{(1)}_k)]
\end{align*}
Notice by scaling the output-layer prior variance parameter $\sigma_{w^{(2)}}^2$ inversely with the network width $K$ (i.e., setting $\sigma_{w^{(2)}}^2 = \tilde{\sigma}_{w^{(2)}}^2/K$ for some $\tilde{\sigma}_{w^{(2)}}^2$), the prior covariance is the same for any width. Then, as the number of hidden units tends to infinity, application of the central limit theorem reveals that for any set of inputs $x$, the prior predictive distribution over the function output $f(x)$ approaches a multivariate Gaussian with the covariance given above. This  the definition of a \textit{Gaussian process} (GP). In this case, it is called the neural network Gaussian process (NNGP), since the covariance function of the multivariate Gaussian distribution over $f(x)$ is induced by the neural network.

Note that while scaling the output-layer prior variance parameter by $1/K$ is useful for analyzing the theoretical properties of a BNN, it is an important practical consideration, too. Otherwise the prior predictive variance could become too large as the width increases. 

\paragraph{Variational inference}
Unfortunately, the posterior distribution of a finite-width BNN is not available in closed-form and Markov chain Monte Carlo (MCMC) methods are too slow for all but the smallest networks. Instead, it is common to find the closest distribution $q_\phi$ in KL divergence to the posterior by maximizing a lower bound on the marginal likelihood called the evidence lower bound (ELBO):
\begin{equation}
	\label{eq:elbo}
	\text{ELBO}(\phi) = \mathbb{E}_{\theta\sim q_\phi}[\log \mathcal{L}(\theta)] - \mathbb{K}\mathbb{L}[q_\phi || p(\theta)].
\end{equation}
The first term in the ELBO is the expected log likelihood, which measures how well the model fits the data, and the second term is the Kullback-Leibler (KL) divergence regularization, which measure how close $q_\phi$ is to the prior $p(\theta)$. 

The \textit{variational} distribution $q_\phi$ is parameterized by a set of variational parameters $\phi$. A common choice for $q_\phi$ is a product of independent (i.e., \textit{mean-field}) Gaussian distributions, one distribution for each parameter $\theta_i$ in the model:
\begin{equation}
	q_\phi(\theta) = \prod_{i=1}^{|\theta|} \mathcal{N}(\theta_i \mid \mu_i, \sigma^2_i).
	\label{eq:q}
\end{equation}
Here, $\phi = \{(\mu_i, \sigma^2_i)\}$ are the variational parameters. We call variational inference using Equation \ref{eq:q} \textit{mean-field variational inference} (MFVI).

Since both the prior and variational distribution are Gaussian, the KL divergence can be calculated in closed-form. For example, the KL divergence between the variational distribution of all $K$ output-layer weights $w^{(2)}$ and a $\mathcal{N}(0, \frac{1}{K}I_K )$ prior is 
\begin{align}
	\frac{1}{2} \sum_{k=1}^{K} \left[K \mu_k^2 + K\sigma_k^ 2 -1 - \log K \sigma_k^2 \right],
	\label{eq:kl}
\end{align}
where $\{\mu_k\}$ and $\{\sigma^2_k\}$ are the variational parameters. Notice Equation \ref{eq:kl} acts like L2 regularization of the mean parameters, which will be key to our proof of Theorem \ref{th:1}.


\begin{figure}[H]
	\centering
	\includegraphics[width=.47\textwidth]{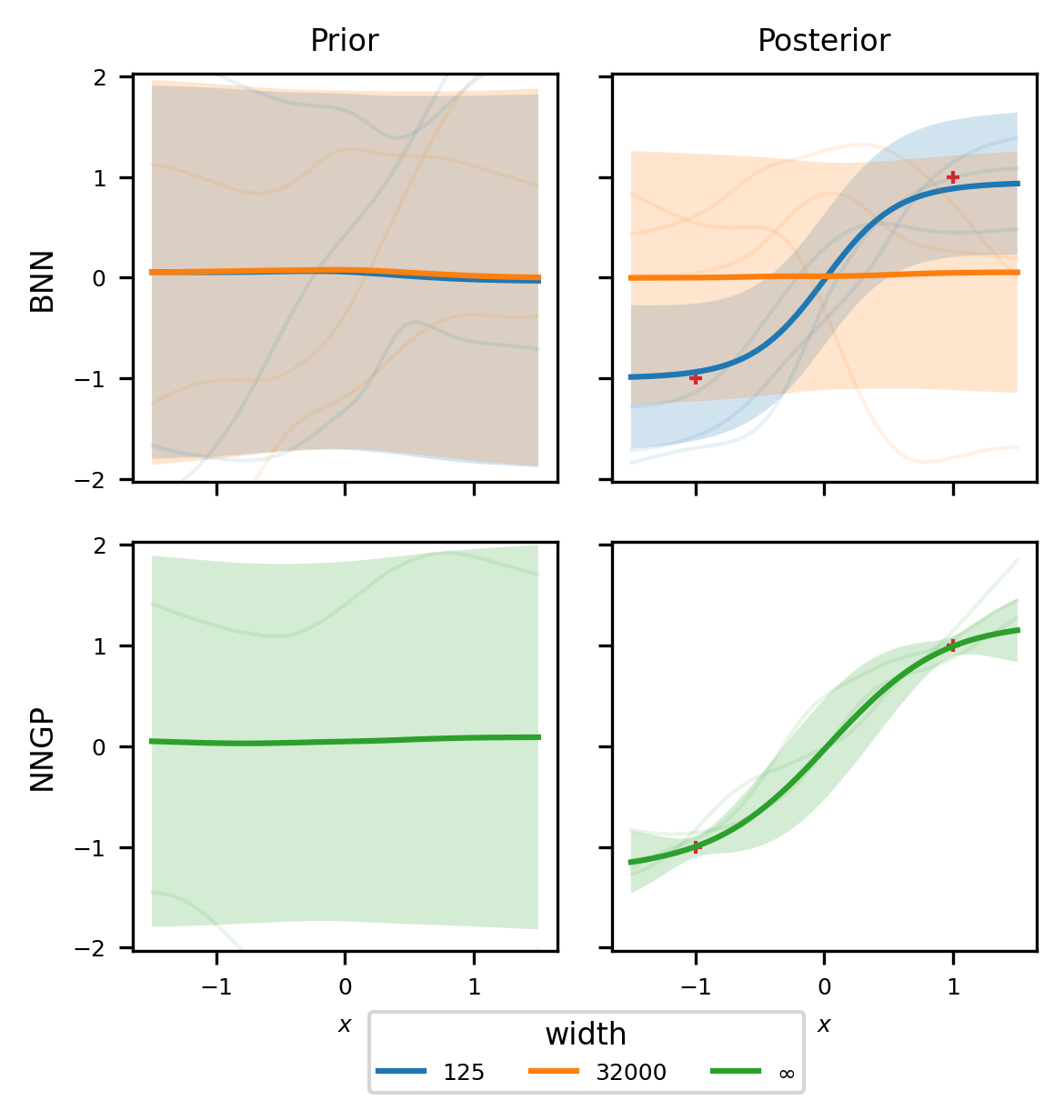}
	\vspace{-.3cm}
	\caption{Prior and posterior predictive distributions for mean-field variational BNNs of different widths compared to the NNGP, to which the true posterior of the BNN converges. For a large width, the mean-field variational BNN ignores the data, unlike the NNGP. The shaded regions constitute $\pm 1$ standard deviation around the means (solid lines). All estimates are based on 1000 function samples (a few of which are drawn faintly).}
	\label{fig:posteriors}
\end{figure}

\section{Main result} \label{sec:main}

Figure \ref{fig:posteriors} illustrates our main observation. The left column compares the prior predictive distributions of BNNs of various widths (top panel) to the Gaussian process prior to which these BNNs converge (i.e., the NNGP, bottom panel). As expected from the scaling in the prior, the prior predictive mean and variance is the same between all finite and infinite width models. However, the right column reveals significant differences in the posterior predictive distributions. While the NNGP posterior exhibits a posterior mean that models the data and a posterior variance that expands outside of the data, the MFVI BNN posterior predictives completely ignore the data as the width increases. We use the implementation provided by \cite{novak_2019} to compute the NNGP.

Like other BNN pathologies observed in practice, it is not immediately obvious whether the behavior is due to the variational objective itself or poor optimization of the ELBO. However, in Theorem \ref{th:1}, we show that the underfitting observed in wide BNNs is due to the choice of the variational objective itself (i.e., the combination of the choice of the variational family, the prior and the divergence measure in variational inference). Specifically, we prove that the MFVI posterior mean converges to zero as the width approaches infinity. Although we assume an error function activation function, we empirically demonstrate that the same conclusions likely hold other activations (we provide results for ReLU and tanh activations in Section \ref{sec:experiments}).

\begin{theorem}
	\label{th:1}
	Consider a mean-field variational BNN of width $K$ as described by equations \ref{eq:bnn} and \ref{eq:prior1}-\ref{eq:prior3} and assume an error function activation $\phi=\mathrm{erf}(z):=\int_{0}^z 2/\sqrt{\pi} \exp(-t^2)~dt$. Suppose $\hat{\phi}_K$ maximizes the ELBO given in Equation \ref{eq:elbo}. Then for any dataset $\mathcal{D}$ and input $x^*$, the variational posterior predictive mean $\mathbb{E}_{\theta \sim q_{\hat{\phi}_K}}[f(x^*,\theta)]$ converges to zero as the number of hidden units $K\to\infty$.
\end{theorem}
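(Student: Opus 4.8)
The plan is to bound the variational predictive mean directly and show it must vanish. By the mean-field factorization, $w^{(2)}_k$ is independent of the input-layer parameters under $q_{\hat{\phi}_K}$, so the predictive mean decomposes as
\[
\mathbb{E}_{q_{\hat{\phi}_K}}[f(x^*,\theta)] = \sum_{k=1}^K \mu^{(2)}_k\, g_k(x^*), \qquad g_k(x^*) := \mathbb{E}_{q_{\hat{\phi}_K}}[\mathrm{erf}(w^{(1)T}_k x^* + b^{(1)}_k)],
\]
where $\mu^{(2)}_k$ is the variational mean of the $k$-th output weight. The crucial structural facts are that $\mathrm{erf}$ is bounded, so $|g_k(x^*)|\le 1$ for every $k$, and that a Gaussian pre-activation with mean $m_k$ and variance $s_k^2$ gives the closed form $g_k(x^*) = \mathrm{erf}(m_k/\sqrt{1+2s_k^2})$. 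A naive Cauchy--Schwarz bound alone only yields a constant, so I will separately control (i) the magnitudes of the output means and (ii) the number of hidden units whose expected activation is non-negligible.

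For (i) I would first show the total KL divergence at the optimum is bounded by a constant independent of $K$. Evaluating the ELBO of Equation \ref{eq:elbo} at the prior ($q=p$) gives $\mathbb{KL}=0$ and an expected log-likelihood equal to $-\tfrac{N}{2}\log(2\pi\sigma^2_{\mathrm{noise}}) - \tfrac{1}{2\sigma^2_{\mathrm{noise}}}\sum_n (y_n^2 + \mathrm{Var}_p[f(x_n)])$; here the $1/K$ scaling of the output prior keeps $\mathrm{Var}_p[f(x_n)]$ bounded uniformly in $K$, so this is a finite constant. Since $\hat{\phi}_K$ maximizes the ELBO and the expected log-likelihood is bounded above by $-\tfrac{N}{2}\log(2\pi\sigma^2_{\mathrm{noise}})$, rearranging yields $\mathbb{KL}[q_{\hat{\phi}_K}\,\|\,p]\le C$ for a constant $C$ not depending on $K$. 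Because the KL factorizes over parameters and each factor is nonnegative, the output-layer contribution (Equation \ref{eq:kl}) is itself at most $C$; dropping its nonnegative variance terms $K\sigma_k^2 - 1 - \log K\sigma_k^2 \ge 0$ leaves $\tfrac{K}{2}\sum_k (\mu^{(2)}_k)^2 \le C$, i.e. $\sum_k (\mu^{(2)}_k)^2 \le 2C/K$.

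For (ii), which I expect to be the main obstacle, I would use the same bounded input-layer KL to argue that only $O(1)$ units can be ``active.'' Fix $\epsilon>0$. If $|g_k(x^*)|\ge\epsilon$ then $|m_k|\ge \mathrm{erf}^{-1}(\epsilon)\sqrt{1+2s_k^2}\ge \mathrm{erf}^{-1}(\epsilon)$, and since $|m_k|\le \|\mu^{(1)}_k\|\,\|x^*\| + |\mu^b_k|$, the squared variational means of unit $k$'s input weights and bias are bounded below by a positive constant depending only on $\epsilon$ and $x^*$. As the Gaussian KL of that unit is at least its mean-dependent part, each active unit costs at least a fixed $\kappa_\epsilon>0$ of input-layer KL; since the total input-layer KL is at most $C$, at most $M_\epsilon := C/\kappa_\epsilon$ units satisfy $|g_k(x^*)|\ge\epsilon$, a bound independent of $K$.

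Finally I would combine the two controls, splitting the predictive mean into active and inactive units and applying Cauchy--Schwarz to each:
\[
\left| \sum_{k} \mu^{(2)}_k g_k(x^*) \right| \le \sqrt{M_\epsilon}\sqrt{\textstyle\sum_k (\mu^{(2)}_k)^2} + \epsilon \sqrt{K}\sqrt{\textstyle\sum_k (\mu^{(2)}_k)^2} \le \sqrt{M_\epsilon}\sqrt{\tfrac{2C}{K}} + \epsilon\sqrt{2C}.
\]
Letting $K\to\infty$ kills the first term, leaving $\limsup_K|\mathbb{E}_{q_{\hat{\phi}_K}}[f(x^*,\theta)]|\le \epsilon\sqrt{2C}$; since $\epsilon>0$ is arbitrary, the predictive mean converges to zero. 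The delicate point throughout is that the $1/K$ output-prior scaling forces the output means to be $O(1/\sqrt{K})$ in $\ell_2$, so the data can only be fit if a growing number of units contribute coherently --- but the unscaled input-layer prior makes each such contribution cost a fixed amount of KL, which the bounded ELBO budget forbids.
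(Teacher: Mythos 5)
Your proposal is correct. Steps (i) of your argument — evaluating the ELBO at the prior to get a $K$-independent budget $C$ on the KL divergence, then peeling off the nonnegative variance contributions to conclude $\sum_k (\mu^{(2)}_k)^2 \le 2C/K$ and that the input-layer squared means are summable with a $K$-independent bound — are exactly Steps 1 and 2 of the paper's proof, and your use of the closed form $\mathbb{E}[\mathrm{erf}(\mathcal{N}(m,s^2))] = \mathrm{erf}(m/\sqrt{1+2s^2})$ matches the paper's Lemma 1. Where you genuinely diverge is the final step. The paper bounds $\sum_k g_k(x^*)^2$ globally: it applies P\'olya's inequality $\mathrm{erf}(z)^2 \le 1 - \exp(-\tfrac{4}{\pi}z^2)$ and then Jensen's inequality to show $\sum_k g_k(x^*)^2 \le K\bigl(1 - \exp(-\tfrac{4}{\pi K}C_{X,x^*})\bigr)$, which cancels the $1/K$ from the output means and yields an explicit $O(K^{-1/2})$ rate. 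You instead run a counting argument: any unit with $|g_k(x^*)|\ge\epsilon$ must have input-layer means bounded away from zero, hence costs a fixed amount $\kappa_\epsilon$ of KL, so at most $M_\epsilon = O(1)$ units are active; splitting the sum and sending $\epsilon\to 0$ gives convergence. Your route is more elementary (no P\'olya inequality, no Jensen) and the "few active units" picture arguably makes the over-pruning mechanism more transparent, but it sacrifices the quantitative rate the paper obtains and still leans on the erf closed form to translate $|g_k|\ge\epsilon$ into a lower bound on $|m_k|$ — so it would not extend verbatim to other activations any more easily than the paper's argument does.
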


To understand why Theorem \ref{th:1} holds, think of the single-layer neural network in Equation \ref{eq:bnn} as a linear model using $K$ nonlinear basis functions $\psi(w^{(1)T}_k x + b^{(1)}_k )$. Upon observing data, as the number of basis functions $K$ increases, the prior encourages each of them to be weighted less in the posterior, because of the $1/K$ scaling of the prior variance of the output-layer weights $w^{(2)}$. This tradeoff is what allows the prior predictive variance to converge as $K\to\infty$. 

Unfortunately, because of the KL regularization of input-layer parameters $w^{(1)}$ and $b^{(1)}$ in the ELBO, basis functions cannot freely be added to the model without further penalty. Recall that the number of observations is fixed, so there is little improvement to the expected log likelihood term in the ELBO once the neural network has enough hidden units to fit the data. This limits the degree to which the variational distribution over the input-layer parameters can differ from the prior and, in the limit of $K$, prevents the model from fitting the data. 

In contrast, a random features model (e.g., Random Fourier Features \cite{rahimi_2007}), where the input-layer parameters are drawn from a fixed distribution at initialization, has no such issue. Additional basis functions are drawn from the same distribution for all $K$ and can be added to the model without penalty. 

The formal proof can be found in the Appendix, but we give a brief sketch here. The proof of Theorem \ref{th:1} follows in roughly three steps. 
\begin{itemize}
\item \textbf{Step 1: evaluate the ELBO at the prior}
Notice the ELBO evaluated at the prior does not depend on the width $K$. This provides a lower bound on the ELBO that holds for networks of any width.

\item \textbf{Step 2: bound the KL regularization}
Because the ELBO cannot be lower than the bound, the KL divergence cannot be higher than the bound. In particular, this constrains the L2 norm of the optimal variational mean parameters (see Equation \ref{eq:kl}).

\item \textbf{Step 3: bound the posterior mean} 
Using the constraint on variational mean parameters, application a few basic inequalities to the posterior mean reveals convergence to zero. 
\end{itemize}

While Theorem \ref{th:1} shows that the variational posterior predictive mean converges to the prior predictive mean, we provide empirical evidence that the variational posterior predictive variance converges also to the prior predictive variance. In particular, Figure \ref{fig:varational_params} in the supplementary material shows that the variational posterior variances approaches the prior variances and Figure \ref{fig:convergence} shows that the variational predictive variance becomes more similar to the prior predictive variance as we increase network width. We thus conjecture that a stronger statement is true: that the mean-field variational posterior distribution converges to the prior. A proof of this is current work. Finally, although Theorem \ref{th:1} applies to single hidden layer networks, understanding the asymptotic properties of deep networks is also work in progress.

\begin{figure}[H]
	\centering
	\includegraphics[width=.48\textwidth]{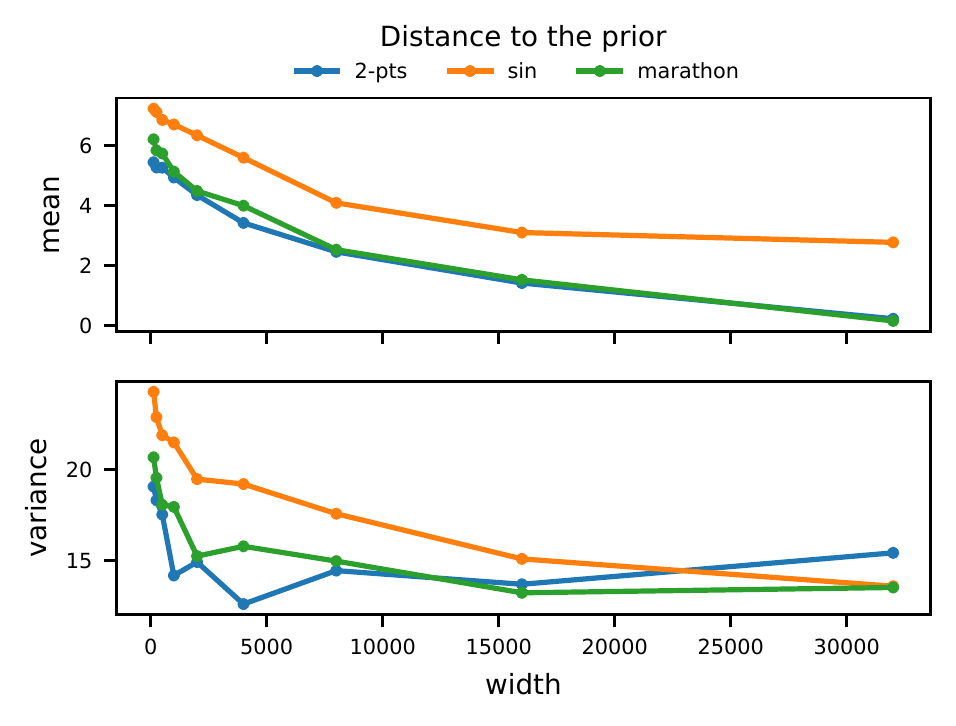}
	\vspace{-.8cm}
	\caption{Across datasets, the mean-field variational posterior of a BNN gets closer to the prior as the width increases. \textit{Top:} Euclidean distance between the prior and posterior predictive mean. \textit{Bottom:} Euclidean distance between the prior and posterior predictive variance. We use 50 test points spaced uniformly over a grid centered roughly around the training data.}
	\label{fig:convergence}
\end{figure}

\begin{figure}[H]
	\centering
	\includegraphics[width=.45\textwidth]{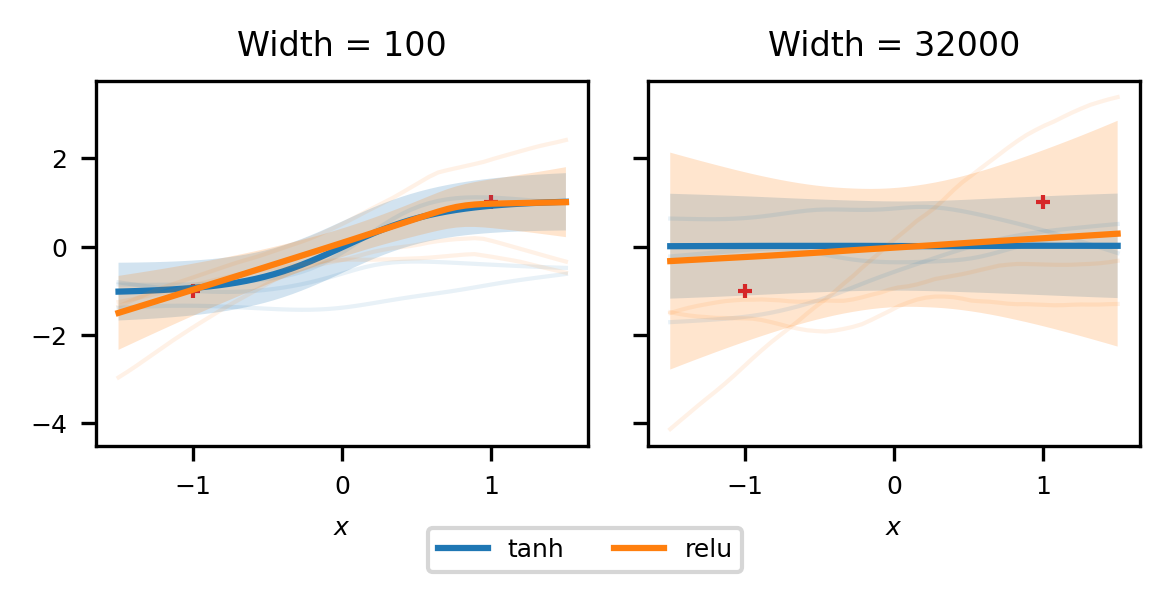}
	\vspace{-0.5cm}
	\caption{Comparison of the posterior predictives of MFVI BNNs using tanh and ReLU activation functions for small (left panel) and large (right panel) widths. Regardless of the activation, wide MFVI BNNs ignore the data.}
	\label{fig:relu}
\end{figure}

\section{Experiments}
\label{sec:experiments}

We begin by analyzing the rate of convergence of the posterior predictive mean of the mean-field variational BNN to zero for different datasets. The datasets are the ``2-points'' dataset shown in Figures \ref{fig:posteriors} and \ref{fig:relu}, a synthetic dataset of noisy observations of a sine wave ($N=20$), and a real dataset of winning Olympic marathon paces over the period 1896 to 2012 ($N=27$).\footnote{Available in the open-source \texttt{pods} Python package \url{https://github.com/sods/ods}.} We $z$-score standardize the inputs and outputs of all datasets. 

For each dataset and widths ranging from $K=125$ to $K=32,000$, the top and bottom panels, respectively, of Figure \ref{fig:convergence} show the Euclidean distance of the posterior predictive mean  and variance under mean-field variational inference to the prior predictive mean and variance. The top panel illustrates the conclusions of Theorem \ref{th:1}: as the width increases, the posterior predictive mean converges to zero. In the bottom panel panel, we see the posterior predictive variance getting closer to the prior, though it is still far away. We will investigate this behavior in future work.

Next we analyze the impact of different activation functions. Although Theorem \ref{th:1} assumes an erf activation function, Figure \ref{fig:relu} provides empirical evidence that the results of Theorem \ref{th:1} hold for tanh and ReLU activations as well.

\subsection{Implementation details}
We use a prior variance of 2 for all parameters (i.e., $\sigma_{w^{(1)}}^2=2$,  $\sigma_{b^{(1)}}^2=2$, and $\tilde{\sigma}_{w^{(2)}}^2=2$) and a prior observational noise $\sigma^2_{\text{noise}}=.01$. However, we implement the impact of the prior distribution by scaling the parameters by their prior variance in the forward pass and then using a $\mathcal{N}(0,1)$ distribution as the prior to evaluate the KL divergence term in the ELBO. That is, for any parameter $\theta_0 \in \theta$, where $p(\theta_0) = \mathcal{N}(0, \sigma_{\theta_0}^2)$ is its prior distribution, we replace $\theta_0$ with $\sigma_{\theta_0} \theta_0 $ in the evaluation of $f(x, \theta)$ and  use a $\mathcal{N}(0,1)$ as the prior. In particular, this means that instead of scaling the output-layer prior variance $\sigma_{w^{(2)}}^2$ by $1/K$, we scale the function output by $1/K$ and keep $\sigma_{w^{(2)}}^2$  unscaled. This ``neural tangent kernel'' scaling yields the same prior predictive distribution while enabling a constant learning rate to be used for training networks of different widths \cite{lee_2019}.  

We initialize the variational mean and variance parameters from a normal-inverse-gamma family. Specifically, if $q(\theta_0) = \mathcal{N}(\mu, \sigma^2)$ is the variational distribution corresponding to the parameter $\theta_0$, we randomly initialize $\mu\sim\mathcal{N}(0,1)$ and $\sigma^2 \sim \mathcal{I}\mathcal{G}(\nu+1, \nu)$. It follows from the laws of total expectation and variance that $\mathbb{E}[\theta_0] = 0$ and $\mathbb{V}[\theta_0] = 2$. The hyperparameter $\nu$ controls the concentration of $\sigma^2$ around its initial mean of one (i.e., $\mathbb{E}[\sigma^2]=1$ and $\mathbb{V}[\sigma]=1/(\nu-1)$). We set $\nu$ to the width of the network. We have experimented with other initializations, including very small variances and  mean parameters pretrained to maximize the log likelihood, with little impact on the overall results.

\section{Discussion}


\citet{trippe_2018} discusses \textit{over-pruning}, which is the phenomenon whereby many of the variational distributions over the output-layer weights concentrate around zero. This is undesireable behavior because the amount of over-pruning increases with the degree of over-parameterization and because over-pruning degrades performance --- simpler models that do not permit pruning often perform better. As in our work, the explanation for over-pruning centers around the tension between the likelihood term and the KL divergence term in the ELBO. To reduce the KL divergence, hidden units can be pruned from the model. Relatedly, we prove that as the number hidden units tends to infinity, the KL divergence over-regulates the model by pulling each of the output-layer weights towards zero while limiting the overall probability mass assigned to the input-layer weights (see earlier discussion in Section \ref{sec:main} regarding the intuition of Theorem 1). 

Our work also provides theoretical insight into \textit{cold posteriors}, which is the empirical phenomenon that downweighting the importance of the KL divergence in the ELBO yields better model performance \cite{wenzel_2020}. It is possible this practice serves to undo the over-regularization of the KL divergence that we investigate. 

A common theme in both of these pathologies is that the effect becomes more prominent as the width of the network increases. Yet, the phenomenon of \textit{double descent} shows that it is in this over-parameterized regime where neural networks perform best \cite{belkin_2019}. Therefore, it is critical to understand the properties of wide variational BNNs --- which we prove are considerably different from the true posterior in the mean-field case --- if they are to be adopted in practice. 

%


\bibliography{udl}
\bibliographystyle{icml2021}

%
%
%

\newpage
\onecolumn
\appendix
\section*{Supplementary material}
\section{Proofs}

\subsection{Lemmas}

\begin{lemma}
	\label{lemma1}
	Let $z\sim \mathcal{N}(\mu, \sigma^2)$. Then
	\begin{equation}
		\mathbb{E}[\mathrm{erf}(z)] = \mathrm{erf}\left(\frac{\mu}{\sqrt{1+2\sigma^2}} \right).
	\end{equation}
\end{lemma}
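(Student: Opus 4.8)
The plan is to avoid computing the Gaussian integral head-on and instead reduce the whole claim to a single well-known fact about integrating the standard normal CDF against a Gaussian. First I would rewrite the error function in terms of the standard normal CDF $\Phi$. Using the identity $\Phi(x) = \tfrac{1}{2}\bigl[1 + \mathrm{erf}(x/\sqrt{2})\bigr]$, one gets $\mathrm{erf}(z) = 2\Phi(\sqrt{2}\,z) - 1$ (both sides are odd functions, so this holds for all real $z$). Taking expectations then reduces the claim to evaluating $\mathbb{E}[\Phi(\sqrt{2}\,z)]$, and since $z \sim \mathcal{N}(\mu,\sigma^2)$ we have $\sqrt{2}\,z \sim \mathcal{N}(\sqrt{2}\,\mu,\,2\sigma^2)$.

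The key step is the ``probit integral'' identity: for $X \sim \mathcal{N}(m, s^2)$, one has $\mathbb{E}[\Phi(X)] = \Phi\bigl(m/\sqrt{1+s^2}\bigr)$. I would prove this probabilistically by introducing an auxiliary variable $Y \sim \mathcal{N}(0,1)$ independent of $X$ and writing $\mathbb{E}[\Phi(X)] = \mathbb{E}[\,P(Y \le X \mid X)\,] = P(Y \le X)$. Since $Y - X \sim \mathcal{N}(-m,\,1+s^2)$, this probability is $P(Y - X \le 0) = \Phi\bigl(m/\sqrt{1+s^2}\bigr)$, which is exactly the identity.

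Putting the pieces together, I would apply the identity with $m = \sqrt{2}\,\mu$ and $s^2 = 2\sigma^2$ to obtain $\mathbb{E}[\Phi(\sqrt{2}\,z)] = \Phi\bigl(\sqrt{2}\,\mu/\sqrt{1+2\sigma^2}\bigr)$. Then unwind the first substitution: $\mathbb{E}[\mathrm{erf}(z)] = 2\Phi\bigl(\sqrt{2}\,\mu/\sqrt{1+2\sigma^2}\bigr) - 1 = \mathrm{erf}\bigl(\mu/\sqrt{1+2\sigma^2}\bigr)$, applying the $\mathrm{erf}$-to-$\Phi$ conversion a second time in reverse.

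I expect the only real obstacle to be bookkeeping: keeping the factors of $\sqrt{2}$ straight across the two conversions between $\mathrm{erf}$ and $\Phi$, and correctly computing that $Y - X$ has variance $1 + s^2$ in the auxiliary-variable step. Everything else is mechanical. If one prefers to avoid invoking the probit identity, an alternative is to write $\mathbb{E}[\mathrm{erf}(z)]$ as a double integral, interchange the order of integration, and complete the square in the inner Gaussian integral; this is more computation but relies on no external facts.
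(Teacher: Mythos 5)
Your proposal is correct and follows essentially the same route as the paper's proof: the same reduction $\mathrm{erf}(z) = 2\Phi(\sqrt{2}\,z) - 1$, the same probit identity $\mathbb{E}[\Phi(X)] = \Phi\bigl(m/\sqrt{1+s^2}\bigr)$, and the same probabilistic argument for it via an independent auxiliary standard normal and the distribution of $Y - X$. The factors of $\sqrt{2}$ work out exactly as you describe, so there is nothing to add.
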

\begin{proof}
	First we claim 
	\begin{align}
		\mathbb{E}[\Phi(z)] = \Phi\left(\frac{\mu}{\sqrt{1+\sigma^2}} \right),
		\label{eq:lemma1_0}
	\end{align}
	where $\Phi(z) := \int_{-\infty}^z \mathcal{N}(t \mid 0, 1)~dt$ is the cumulative distribution function of a standard normal distribution. To see this, let $x \sim \mathcal{N}(0,1)$ and notice $P(x \le z \mid z=t) = P(x \le t) = \Phi(t)$. By the law of total probability:
	\begin{align}
		P(x \le z) &= \int P(x\le z \mid z=t) p(z=t) ~dt
		\\
		&= \int \Phi(t) ~\mathcal{N}(z=t \mid \mu, \sigma^2) ~dt
		\\
		&= \mathbb{E}[\Phi(z)].
		\label{eq:lemma1_1}
	\end{align}
	Now, since $x$ and $z$ are independent, notice $x - z \sim \mathcal{N}(-\mu, 1+\sigma^2)$. Therefore, 
	\begin{align}
		P(x \le z) &= P(x - z \le 0)  = \Phi\left(\frac{\mu}{\sqrt{1+\sigma^2}} \right).
		\label{eq:lemma1_2}
	\end{align}
	Equation \ref{eq:lemma1_0} follows from Equations \ref{eq:lemma1_1} and \ref{eq:lemma1_2}.
 
 	Noting $\text{erf}(z) = 2 \Phi(\sqrt{2} z) -1$ and applying Equation \ref{eq:lemma1_0}  we have the desired result:
	\begin{align}
		\mathbb{E}[\mathrm{erf}(z)] &= \mathbb{E}[2 \Phi(\sqrt{2} z) -1]
		\\
		&= 2\mathbb{E}[\Phi(\sqrt{2} z)] - 1
		\\
		&= 2 \Phi\left(\frac{\sqrt{2} \mu}{\sqrt{1+2\sigma^2}} \right) - 1
		\\
		&= 2 \left( \frac{1}{2} \mathrm{erf}\left( \frac{\mu}{\sqrt{1+2\sigma^2}}\right) + \frac{1}{2}\right) - 1
		\\
		&= \mathrm{erf}\left( \frac{ \mu}{\sqrt{1+2\sigma^2}}\right).
	\end{align}
	
\end{proof}

\begin{lemma}
	\label{lemma2}
	For all $z\in\mathbb{R}$,
	\begin{equation}
		\mathrm{erf}(z)^2 \le 1-\exp(-\frac{4}{\pi} z^2).
	\end{equation}
\end{lemma}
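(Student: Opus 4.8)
The plan is to recognize both quantities as Gaussian integrals over planar regions of \emph{equal area} and to compare them by a rearrangement argument. Since $\mathrm{erf}$ is odd, both $\mathrm{erf}(z)^2$ and $1-\exp(-\tfrac{4}{\pi}z^2)$ are even in $z$, so it suffices to treat $z \ge 0$; at $z=0$ both sides vanish and as $z\to\infty$ both tend to $1$, which already suggests the bound is tight at the endpoints.

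First I would rewrite the left-hand side as a double integral. Using $\mathrm{erf}(z) = \tfrac{2}{\sqrt{\pi}}\int_0^z e^{-t^2}\,dt$,
\[
	\mathrm{erf}(z)^2 = \frac{4}{\pi}\left(\int_0^z e^{-t^2}\,dt\right)^2 = \frac{4}{\pi}\iint_{S} e^{-(s^2+t^2)}\,ds\,dt,
\]
where $S = [0,z]^2$ is the square of area $z^2$. Next I would express the right-hand side as a Gaussian integral over a quarter disk. In polar coordinates, for $Q_R = \{(s,t): s,t\ge 0,\ s^2+t^2 \le R^2\}$,
\[
	\frac{4}{\pi}\iint_{Q_R} e^{-(s^2+t^2)}\,ds\,dt = \frac{4}{\pi}\int_0^{\pi/2}\!\!\int_0^R e^{-r^2}\,r\,dr\,d\theta = 1-e^{-R^2}.
\]
Choosing $R = 2z/\sqrt{\pi}$ makes this equal to $1-\exp(-\tfrac{4}{\pi}z^2)$ and, crucially, gives $Q_R$ area $\tfrac{\pi}{4}R^2 = z^2$, exactly the area of $S$.

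The claim thus reduces to $\iint_S e^{-r^2}\,dA \le \iint_{Q_R} e^{-r^2}\,dA$ for two regions of equal area, which I would establish by a bathtub-principle argument exploiting that $e^{-r^2}$ is strictly decreasing in $r=\sqrt{s^2+t^2}$. Cancelling the common overlap $S\cap Q_R$, the difference of integrals equals $\iint_{Q_R\setminus S} e^{-r^2}\,dA - \iint_{S\setminus Q_R} e^{-r^2}\,dA$. By construction every point of $Q_R\setminus S$ satisfies $r\le R$, hence $e^{-r^2}\ge e^{-R^2}$, while every point of $S\setminus Q_R$ satisfies $r>R$, hence $e^{-r^2}\le e^{-R^2}$. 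Since $|S|=|Q_R|$ forces $|Q_R\setminus S| = |S\setminus Q_R|$, the difference is bounded below by $e^{-R^2}\bigl(|Q_R\setminus S| - |S\setminus Q_R|\bigr) = 0$, giving the inequality.

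The main obstacle is spotting the right comparison region rather than any analytic difficulty: the bound is tight at both $z=0$ and $z\to\infty$, so any crude estimate fails, and the exponent $\tfrac{4}{\pi}$ is precisely what makes the matched quarter disk share the square's area. Once the area-matching is in place the rearrangement step is routine, so essentially all the content lies in identifying $R=2z/\sqrt{\pi}$ and verifying the equal-area condition.
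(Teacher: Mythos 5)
Your proof is correct, and it takes a genuinely different route from the paper's. The paper does not prove the core inequality itself: it invokes P\'olya's 1949 bound $G(z) \le \tfrac{1}{2}\bigl(1-\exp(-\tfrac{2}{\pi}z^2)\bigr)^{1/2}$ for the standard normal integral $G$ as a cited result, then converts it to the stated form via $G(z)=\tfrac{1}{2}\operatorname{erf}(z/\sqrt{2})$, squaring, and the substitution $z\mapsto\sqrt{2}z$ (plus the same evenness observation you make to cover $z<0$). You instead give a self-contained, first-principles argument: writing $\operatorname{erf}(z)^2$ as a Gaussian integral over the square $[0,z]^2$, matching it against the quarter disk of radius $R=2z/\sqrt{\pi}$ (which has equal area $z^2$ and yields exactly $1-e^{-4z^2/\pi}$), and concluding by the bathtub/rearrangement comparison, which is airtight as you state it since $|Q_R\setminus S|=|S\setminus Q_R|$ and the integrand is at least $e^{-R^2}$ on $Q_R\setminus S$ and at most $e^{-R^2}$ on $S\setminus Q_R$. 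Your argument is essentially a reconstruction of P\'olya's original geometric proof, so the mathematical content coincides with what the paper outsources to the citation; what your version buys is that the lemma becomes fully self-contained and makes transparent why the constant $\tfrac{4}{\pi}$ is the sharp one (it is forced by the equal-area condition, consistent with tightness at $z=0$ and $z\to\infty$), whereas the paper's version is shorter and defers that insight to the reference.
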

\begin{proof}
	Define
	\begin{equation}
	G(z) = \int_{0}^z (2\pi)^{-1/2} \exp\left(-\frac{1}{2} t^2\right)~dt.
	\end{equation}
	For any $z\ge 0$, \citet{polya_1949} proved the following inequality (see Equation 1.5 in the referenced paper):
	\begin{equation}
		G(z) \le \frac{1}{2}\left(1 - \exp\left(-\frac{2}{\pi} z^2 \right) \right)^{1/2}.
		\label{eq:lemma2_polya}
	\end{equation}
	Since $\text{erf}(z) := \int_0^z 2/\sqrt{\pi} \exp(-t^2)~dt$, the change of variables $s=t/\sqrt{2}$ shows $G(z) = \text{erf}(z/\sqrt{2}) / 2$. Equation \ref{eq:lemma2_polya} is therefore equivalent to
	\begin{align}
		\frac{1}{2} \text{erf}(z/\sqrt{2}) &\le \frac{1}{2}\left(1 - \exp\left(-\frac{2}{\pi} z^2 \right) \right)^{1/2}
		\\
		\text{erf}(z/\sqrt{2})^2 &\le 1 - \exp\left(-\frac{2}{\pi} z^2 \right)
		\\
		\text{erf}(z)^2 &\le 1 - \exp\left(-\frac{4}{\pi} z^2 \right),
	\end{align}
	where the final inequality comes from evaluating at $z \leftarrow \sqrt{2} z$. Note that so far we have assumed $z\ge0$, but notice each side of the final inequality is the same for $z<0$ (i.e., 
	$\text{erf}(-z)^2 = \text{erf}(z)^2$ and $1 - \exp\left(-\frac{4}{\pi} (-z)^2 \right) = 1 - \exp\left(-\frac{4}{\pi} z^2 \right)$), so the final inequality holds for all $z\in\mathbb{R}$. 
\end{proof}

\begin{lemma}
	\label{lemma3}
	Assume $\sum_{k=1}^K \mu_k^2 \le C_0$, where $C_0\in\mathbb{R}$ and 
	$\mu_k \in \mathbb{R}$ for all $k\in\{1,\dotsc,K\}$. Then, for any constant $C_1>0$,
	\begin{equation}
	\sum_{k=1}^K \exp\left(-C_1 \mu_k^2 \right) \ge K \exp\left(-\frac{1}{K}C_1 C_0 \right).
	\end{equation}
\end{lemma}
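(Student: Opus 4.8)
The plan is to recognize this as a one-line consequence of Jensen's inequality applied to a convex function, followed by a short monotonicity argument that brings in the hypothesis. First I would set $g(t) = \exp(-C_1 t)$ and note that $g$ is convex on all of $\mathbb{R}$, since $g''(t) = C_1^2 \exp(-C_1 t) > 0$. Treating $\mu_1^2, \dotsc, \mu_K^2$ as the points at which to evaluate $g$ and using the uniform weights $1/K$, Jensen's inequality (the lower-bound direction for convex functions) gives
\begin{equation*}
\frac{1}{K}\sum_{k=1}^K \exp(-C_1 \mu_k^2) = \frac{1}{K}\sum_{k=1}^K g(\mu_k^2) \ge g\!\left(\frac{1}{K}\sum_{k=1}^K \mu_k^2\right) = \exp\!\left(-\frac{C_1}{K}\sum_{k=1}^K \mu_k^2\right).
\end{equation*}

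Next I would invoke the hypothesis $\sum_{k=1}^K \mu_k^2 \le C_0$. Because $C_1 > 0$, the map $t \mapsto \exp(-C_1 t)$ is decreasing, so enlarging its argument only shrinks its value; since $\frac{1}{K}\sum_k \mu_k^2 \le C_0/K$, the right-hand side above is bounded below by $\exp(-C_1 C_0 / K)$. Multiplying through by $K$ then yields exactly the claimed inequality
\begin{equation*}
\sum_{k=1}^K \exp(-C_1 \mu_k^2) \ge K \exp\!\left(-\frac{1}{K} C_1 C_0\right).
\end{equation*}

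I do not anticipate any genuine obstacle, as the argument rests only on the convexity and monotonicity of a single-variable exponential, both elementary. The one place deserving care is the bookkeeping of inequality directions: Jensen supplies a \emph{lower} bound for the average of a convex function, and the monotonicity step requires the exponent to be decreasing so that relaxing $\sum_k \mu_k^2$ up to $C_0$ can only decrease the exponential. Jensen's inequality is tight precisely when the budget $\sum_k \mu_k^2$ is distributed equally across the $K$ units (i.e.\ $\mu_k^2 = C_0/K$ for all $k$), so the stated bound is the worst case that the subsequent Step~3 estimate of the posterior mean in the proof of Theorem \ref{th:1} must contend with.
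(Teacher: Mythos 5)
Your proof is correct and follows essentially the same route as the paper's: both apply Jensen's inequality to the convex exponential (you phrase it as $g(t)=\exp(-C_1 t)$ at the points $\mu_k^2$, the paper as $\exp(\cdot)$ at the points $-C_1\mu_k^2$, which is the identical inequality) and then use monotonicity together with the hypothesis $\sum_k \mu_k^2 \le C_0$. No gaps.
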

\begin{proof}
	By assumption,
	\begin{align}
		\sum_{k=1}^K \mu_k^2 &\le C_0 
		\\
		-\frac{1}{K}\sum_{k=1}^K C_1 \mu_k^2 & \ge -\frac{1}{K}C_1 C_0
		\\
		\exp\left( -\frac{1}{K}\sum_{k=1}^K C_1 \mu_k^2 \right) &\ge \exp\left(-\frac{1}{K}C_1 C_0 \right).
	\end{align}
By Jensen's inequality using the convex function $\exp(\cdot)$,
\begin{align}
	\frac{1}{K}\sum_{k=1}^K \exp\left( -C_1 \mu_k^2\right) &\ge \exp\left( \frac{1}{K}\sum_{k=1}^K (-C_1 \mu_k^2) \right)
	\\
	&= \exp\left(- \frac{1}{K}\sum_{k=1}^K C_1 \mu_k^2 \right)
	\\
	&\ge \exp\left( -\frac{1}{K}C_1 C_0 \right).
\end{align}
Multiplying each side by $K$ gives the desired result. 
\end{proof}

\begin{lemma}
	\label{lemma4}
	Let $\sum_{k=1}^K a_{km}^2 \le C_m$ for all $m\in\{1,\dotsc,M\}$, where $C_m\in\mathbb{R}$ for all $m\in\{1,\dotsc,M\}$ and  $a_{km}\in\mathbb{R}$ for all $m\in\{1,\dotsc,M\}$ and all $k\in\{1,\dotsc,K\}$. Then
	\begin{equation}
		\sum_{k=1}^K \left( \sum_{m=1}^M a_{km} \right)^2 \le M \sum_{m=1}^M C_m.
	\end{equation}
\end{lemma}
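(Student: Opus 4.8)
The plan is to prove the inequality pointwise in $k$ first and then sum, using the Cauchy--Schwarz inequality to convert the square of a sum into a sum of squares. For each fixed $k$, I would view $\sum_{m=1}^M a_{km}$ as the inner product of the all-ones vector $\mathbf{1}\in\mathbb{R}^M$ with the vector $(a_{k1},\dotsc,a_{kM})$. Cauchy--Schwarz then yields
\begin{equation}
\left( \sum_{m=1}^M a_{km} \right)^2 = \left( \sum_{m=1}^M 1\cdot a_{km} \right)^2 \le \left( \sum_{m=1}^M 1^2 \right)\left( \sum_{m=1}^M a_{km}^2 \right) = M \sum_{m=1}^M a_{km}^2.
\end{equation}
Equivalently, this is the power-mean (QM--AM) inequality, or Jensen applied to the convex map $t\mapsto t^2$. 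The factor of $M$ in the target bound thus arises precisely as the squared norm $\|\mathbf{1}\|^2$ of the all-ones vector.

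Next I would sum this bound over $k\in\{1,\dotsc,K\}$ and exchange the order of the (finite) double summation:
\begin{equation}
\sum_{k=1}^K \left( \sum_{m=1}^M a_{km} \right)^2 \le M \sum_{k=1}^K \sum_{m=1}^M a_{km}^2 = M \sum_{m=1}^M \left( \sum_{k=1}^K a_{km}^2 \right).
\end{equation}
Finally, I would apply the hypothesis $\sum_{k=1}^K a_{km}^2 \le C_m$ term by term to the inner sums:
\begin{equation}
M \sum_{m=1}^M \left( \sum_{k=1}^K a_{km}^2 \right) \le M \sum_{m=1}^M C_m,
\end{equation}
which is exactly the claimed bound.

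I do not anticipate a substantive obstacle here; the lemma is a routine consequence of Cauchy--Schwarz, and the only real decision is recognizing that the clean way to surface the constant $M$ is to pair each $(a_{km})_m$ with the all-ones vector rather than trying to manipulate the cross terms $a_{km}a_{km'}$ directly. If one insisted instead on expanding the square, the same conclusion would follow by applying $2\,a_{km}a_{km'} \le a_{km}^2 + a_{km'}^2$ to every cross term and counting, but that route is more tedious and far less transparent than the single application of Cauchy--Schwarz above.
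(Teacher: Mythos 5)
Your proposal is correct and follows essentially the same route as the paper's proof: a single application of Cauchy--Schwarz pairing $(a_{km})_m$ with the all-ones vector to obtain $\left(\sum_{m=1}^M a_{km}\right)^2 \le M\sum_{m=1}^M a_{km}^2$, followed by summing over $k$, exchanging the order of summation, and applying the hypothesis term by term. There is nothing to add.
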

\begin{proof}
	Let $k\in \{1,\dotsc,K\}$. By the Cauchy–Schwarz inequality:
	\begin{align}
		\left( \sum_{m=1}^M a_{km} \right)^2  &= \left( \sum_{m=1}^M 1 \cdot a_{km} \right)^2
		\\
		&\le \left(\sum_{m=1}^M 1^2 \right) \left(\sum_{m=1}^M a_{km}^2 \right)
		\\
		&= M \sum_{m=1}^M a_{km}^2.
	\end{align}
	Therefore,
	\begin{align}
		\sum_{k=1}^K \left( \sum_{m=1}^M a_{km} \right)^2 
		&\le \sum_{k=1}^K \left( M \sum_{m=1}^M a_{km}^2 \right)
		\\
		&= M \sum_{m=1}^M \sum_{k=1}^K a_{km}^2 
		\\
		&\le M \sum_{m=1}^M C_m
	\end{align}
\end{proof}

\subsection{Proof of Theorem \ref{th:1}}

\begin{proof}
	For simplicity, assume the prior variance parameters ($\sigma_{w^{(1)}}^2$,  $\sigma_{b^{(1)}}^2$, and $\sigma_{w^{(2)}}^2$) and the observational noise variance parameter $\sigma_{\text{noise}}^2$ are all equal to 1, but the proof generalizes for any positive values of these parameters. 
	
	We abuse notation by letting $w^{(1)}_d \in \mathbb{R}^{K}$ denote the input-layer weight parameters corresponding to \textit{input dimension} $d$ (i.e., and going to all $K$ hidden units),  $w^{(1)}_k \in \mathbb{R}^{D}$ denote the input-layer weight parameters corresponding to \textit{hidden unit} $k$ (i.e., and coming from all $D$ input dimensions), and $w^{(1)}_{kd} \in \mathbb{R}$ denote the single weight parameter corresponding to both input dimension $d$ and hidden unit $k$. We also let $w^{(2)}$ and $b^{(2)}$ denote all $K$ output-layer weight and bias parameters, respectively. 
	
	Recall we define $\theta = \{(w^{(l)}_k, b^{(l)}_k) \}$ as the collection of all model parameters. To avoid complicated subscripts, for any subset $\theta_0 \subset \theta$ of the parameters, define $\phi[\theta_0]$ as the set of variational parameters corresponding to the subset of parameters $\theta_0$. Notice $\phi[\theta_0]$ has twice as many elements as $\theta_0$, since, under the assumed mean-field Gaussian variational distribution, each parameter has a mean and variance variational parameter. So, for example, $\phi[w_d^{(1)}] \in  \mathbb{R}^{2K}$ denotes the input-layer variational parameters corresponding to the $d$th input. Similarly, define $\mu[\theta_0]$ and $\sigma^2[\theta_0]$ as the corresponding variational mean and variance parameters, respectively. We let $\phi$ denote the set of all variational parameters (i.e., $\phi=\phi[\theta]$).
	
	For any $M\in\{1,2,\dotsc\}$ and any subset $\theta_0 \subset \theta$ of the parameters, define $R_M(\phi[\theta_0])$ as the KL divergence of the variational distribution $q_{\phi[\theta_0]}$
	to a $\mathcal{N}(0, \frac{1}{M} I_{|\theta|})$ prior distribution, where $|\theta|$ is the number of elements in $\theta$ and $I_{|\theta|}$ is the $|\theta|\times |\theta|$ identity matrix:
	\begin{align}
		R_M(\phi[\theta_0]) &:= \KL*
		{q_{\phi[\theta_0]}}
		{\mathcal{N}\left(0,\frac{1}{M} I_{|\theta|} \right)}
		\\
		&=  \KL*
		{\mathcal{N}\left(\mu[\theta_0], \text{diag}(\sigma^2[\theta_0]) \right)}
		{\mathcal{N}\left(0,\frac{1}{M} I_{|\theta|} \right) }
		\\
		&= \frac{1}{2} \sum_{i=1}^{|\theta|} \left[M \mu[\theta_{0,i}]^2 + M\sigma^ 2[\theta_{0,i}] -1 - \log M \sigma^2[\theta_{0,i}] \right],
		\label{eq:theorem_kl}
	\end{align}
	where $\theta_{0,i}$ denotes the $i$th element of $\theta_0$.
	
	With this notation, up to an additive constant the negative ELBO for a mean-field variational BNN of width $K$ can be written as:
	\begin{align}
		\text{Loss}(\phi )
		&:=-\text{ELBO}(\phi)
		\\
		&= -\mathbb{E}_{\theta\sim q_\phi}[\log \mathcal{L}(\theta)] + \mathbb{K}\mathbb{L}[q_\phi || p(\theta)]
		\\ 
		&-\underbrace{\frac{1}{2}\sum_{n=1}^N \mathbb{E}_{\theta\sim q_\phi}\left(y_n - f(x_n, \theta) \right)^2}
		_{:=\text{Error}(\phi)} 
		+ \underbrace{\sum_{d=1}^D R_1\left(\phi[w^{(1)}_d] \right) + R_1\left(\phi[b^{(1)}] \right) + R_K\left(\phi[w^{(2)}] \right)}
		_{:=\text{Reg}(\phi)},
	\end{align}
	where $\text{Error}(\phi)$ and $\text{Reg}(\phi)$, respectively, describe the contribution of fitting the data and the KL regularization to the loss. 

	For any width $K\in \{1, 2, \dotsc \}$, let $\phiprior$ be the variational parameters such that the variational distribution $q_{\phiprior}$ is equal to the prior distribution. In other words, the variational parameters where the mean parameters are zero, the input-layer variance parameters are 1, and the output-layer variance parameters are $1/K$. Since $q_{\phiprior}$  is the prior distribution, $\text{Reg}(\phiprior)=0$.

\paragraph{Step 1: evaluate the ELBO at the prior}

We will show that the loss evaluated at the prior, $\text{Loss}(\phiprior)$, does not depend on $K$. This will provide a lower bound on the loss evaluated at the optimal parameters, $\text{Loss}(\phipost)$, that holds for any $K$, which will enable showing the variational distribution needs to stick near the prior. 

To see this, first consider the first two moments of the prior predictive, which are easy to compute because the parameters are independent under the prior:
\begin{align}
	\mathbb{E}_{\phiprior} \left[ f(x_n, \theta)\right] 
	&= \mathbb{E}\left[ \sum_{k=1}^K w^{(2)}_k \psi(w^{(1)T}_k x_n + b^{(1)}_k )  \right] 
	\\
	&=  \sum_{k=1}^K \cancelto{0}{\mathbb{E}\left[w^{(2)}_k\right]} \mathbb{E}\left[\psi(w^{(1)T}_k x_n + b^{(1)}_k ) \right]
	\\
	&= 0
\end{align}
and
\begin{align}
	\mathbb{E}_{\phiprior} \left[ f(x_n, \theta)^2\right]
	&=  \mathbb{V}[f(x_n, \theta)] + \cancelto{0}{\mathbb{E}[f(x_n, \theta)]^2}
	\\
	&= \mathbb{V}\left[ \sum_{k=1}^K w^{(2)}_k \psi(w^{(1)T}_k x_n + b^{(1)}_k ) \right] 
	\\
	&= \sum_{k=1}^K \underbrace{\mathbb{V}\left[w^{(2)}_k\right]}_{1/K} \underbrace{\mathbb{V}\left[\psi(w^{(1)T}_k x_n + b^{(1)}_k ) \right]}_{:= V(x_n)}
	\label{eq:theorem_Vx}
	\\
	&= K \frac{1}{K} V(x_n)
	\\
	&= V(x_n), 
\end{align}
	where we define $V(x_n)$ in Equation \ref{eq:theorem_Vx}. Since the input-layer parameters of each hidden unit, $w^{(1)}_k$ and $b^{(1)}_k$, have the same distribution under the prior for all hidden units, $V(x_n)$ is the same for all  hidden units. Thus we can pull $V(x_n)$ outside of the sum over $k$. With these two moments computed, we can compute the loss under the prior, which will not depend on $K$.
	\vspace{.5cm}
\begin{align}
\text{Loss}(\phiprior) &= \text{Error}(\phiprior) + \cancelto{0}{\text{Reg}(\phiprior)}
\\
&= \frac{1}{2}\sum_{n=1}^N \mathbb{E}_{\phiprior}\left[ \left(y_n - f(x_n, \theta) \right)^2 \right]
\\
&= \frac{1}{2}\sum_{n=1}^N \mathbb{E} \left[y_n^2 - 2y_n f(x_n, \theta) + f(x_n, \theta)^2 \right]
\\
&= \frac{1}{2}\sum_{n=1}^N \left(y_n^2 - 2y_n \cancelto{0}{\mathbb{E}\left[f(x_n, \theta)\right]} + \mathbb{E} \left[ f(x_n, \theta)^2\right] \right)
\\
&=\frac{1}{2}\sum_{n=1}^N\left(y_n^2 + V(x_n) \right)
\\
&:= C_X,
\end{align}
where $X$ is the collection of all $N$ training observations. 
Notice $C_X$ does not depend on $K$.

\paragraph{Step 2: bound the KL regularization}

For any width $K=1,2,\dotsc$, let $[\hat{\mu}_K, \hat{\sigma}^{2}_K] = \phipost  \in \text{argmin}_\phi\  \text{Loss}(\phi)$ be the variational parameters that minimize the loss. Then the minimum of the loss is bounded above by the loss evaluated at $\phiprior$, which we showed does not depend on $K$. In other words,
\begin{align}
	\text{Loss}(\phipost) 
	&= \text{Error}(\phipost) + \text{Reg}(\phipost)
	\\
	&\le \text{Loss}(\phiprior)
	\label{eq:theorem_because_optimal}
	\\
	&= C_X
\end{align}
To provide further explanation, Equation \ref{eq:theorem_because_optimal} holds because otherwise $\phipost$ would not be optimal. In other words, a loss of $C_X =\text{Loss}(\phiprior)$ could always be achieved for any $K$ by setting $q_{\phipost}$ to the prior (i.e., by setting $\phipost=\phiprior$), so the optimal $\phipost$ cannot achieve a worse loss. 

Because $C_X$ does not depend on $K$, it follows that for any $K\in\{1,2,\dotsc\}$,
\begin{align}
	\text{Error}(\phipost) + \text{Reg}(\phipost) &\le C_X
	\\
	\text{Reg}(\phipost) &\le C_X - \text{Error}(\phipost) 
	\\
	\implies \text{Reg}(\phipost) &\le C_X
	\\
	\sum_{d=1}^D R_1\left(\phipost[w^{(1)}_d] \right) + R_1\left(\phipost[b^{(1)}] \right) + R_K\left(\phipost[w^{(2)}] \right) & \le C_X
\end{align}
Therefore, the regularization of the optimal variational parameters $\text{Reg}(\phipost)$ is bounded above by $C_X$. Furthermore, since each of the regularization terms is non-negative, each is less than the bound:
\begin{align}
	R_1\left(\phipost[w^{(1)}_d] \right) &\le C_X, \quad \forall d\in \{1, \dotsc, D\}
	\\
	R_1\left(\phipost[b^{(1)}] \right) &\le C_X
	\\
	R_K\left(\phipost[w^{(2)}] \right) &\le C_X
\end{align}
Additionally, since the contribution of the variance parameters to the KL divergence in Equation \ref{eq:theorem_kl} is non-negative (i.e. since the function $g(\sigma^2) := M\sigma^2 - 1 - \log M\sigma^2 \ge 0$ for all $M>0$), the squared mean parameters, summed over all hidden units, are also bounded (i.e., since $a + b \le c \implies a \le c$ if $b\ge0$):
\begin{align}
	\frac{1}{2}\sum_{k=1}^K \left(\hat{\mu}_K[w^{(1)}_{kd}] \right)^2  &\le C_X, \quad \forall d\in \{1, \dotsc, D\}
	\label{eq:theorem_w1_bound}
	\\
	\frac{1}{2}\sum_{k=1}^K \left( \hat{\mu}_K[b^{(1)}_{k}] \right)^2  &\le C_X
	\label{eq:theorem_b1_bound}
	\\
	\frac{1}{2}\sum_{k=1}^K K \left(\hat{\mu}_K[w^{(2)}_{k}] \right)^2  &\le C_X
	\label{eq:theorem_w2_bound}
\end{align}
	
\paragraph{Step 3: bound the posterior mean}

Using the bounds on the optimal variational mean parameters in Equations \ref{eq:theorem_w1_bound}, \ref{eq:theorem_b1_bound}, and \ref{eq:theorem_w2_bound}, we show the absolute value of the posterior mean converges to zero. 
\begin{align}
	\left | \mathbb{E}_{\theta \sim q_{\phipost}}[f(x^*, \theta)]  \right | 
	&= \left| \mathbb{E}\left[ \sum_{k=1}^K w^{(2)}_k \psi(w^{(1)T}_k x^* + b^{(1)}_k )  \right] \right|
	\\
	&= \left| \sum_{k=1}^K \mathbb{E} \left[w^{(2)}_k \right] \mathbb{E}\left[ \psi(w^{(1)T}_k x^* + b^{(1)}_k ) \right] \right|
	\label{eq:theorem_use_mfvi}
	\\
	&\le \left( \sum_{k=1}^K \mathbb{E} \left[w^{(2)}_k \right]^2 \right)^{1/2} \left( \sum_{k=1}^K \mathbb{E}\left[ \psi(w^{(1)T}_k x^* + b^{(1)}_k ) \right]^2 \right)^{1/2}
	\label{eq:theorem_cauchy}
	\\
	&\le \left( \sum_{k=1}^K \left(\hat{\mu}_K[w^{(2)}_{k}] \right)^2 \right)^{1/2} \left( \sum_{k=1}^K \mathbb{E}\left[ \psi(w^{(1)T}_k x^* + b^{(1)}_k ) \right]^2 \right)^{1/2}
	\\
	&\le \left( \frac{2 C_X}{K} \right)^{1/2} \left( \sum_{k=1}^K \mathbb{E}\left[ \psi(w^{(1)T}_k x^* + b^{(1)}_k ) \right]^2 \right)^{1/2} 
	\label{eq:theorem_almost_there}
\end{align}
where Equation \ref{eq:theorem_use_mfvi} follows because we assume a mean-field posterior and Equation \ref{eq:theorem_cauchy} follows from the Cauchy-Schwarz inequality and the last equation follows from Equation \ref{eq:theorem_w2_bound}.

To bound the second term in Equation \ref{eq:theorem_almost_there}, consider the distribution of the pre-activations $z_k := w^{(1)T}_k x^* + b^{(1)}_k$. Define
\begin{align}
	\hat{\mu}_K[z_k]&:= \sum_{d=1}^D \hat{\mu}_K[w^{(1)}_{kd}] x^*_d + \hat{\mu}_K[b^{(1)}_{k}]
	\\
	\hat{\sigma}^2_K[z_k] &:= \sum_{d=1}^D  \hat{\sigma}^2_K[w^{(1)}_{kd}] x_d^{*2} + \hat{\sigma}^2_K[b^{(1)}_{k}].
\end{align}
Then, since each of the parameters is Gaussian distributed and independent under the mean-field variational posterior distribution, $z_k \sim \mathcal{N}(\hat{\mu}_K[z_k], \hat{\sigma}^2_K[z_k])$.

Next, we use Lemma \ref{lemma4} to bound the sum of the squared means of $z_k$'s so that we can later apply Lemma \ref{lemma3}. For $d=1,\dotsc,D$, define $a_{kd} = \hat{\mu}_K[w^{(1)}_{kd}] x^*_d$ and for $d=D+1$ define $a_{kd} = \hat{\mu}_K[b^{(1)}_{k}]$, so that $\hat{\mu}_K[z_k] = \sum_{d=1}^{D+1} a_{kd}$. Notice for any $d=1,\dotsc,D$, 
\begin{align}
	\sum_{k=1}^K a_{kd}^2 = x_d^{*2} \sum_{k=1}^K \left( \hat{\mu}_K[w^{(1)}_{kd}] \right)^2  \le 2 x_d^{*2} C_X
\end{align}
by Equation \ref{eq:theorem_w1_bound} and for $d=D+1$:
\begin{align}
	\sum_{k=1}^K a_{kd}^2 = \sum_{k=1}^K \left(\hat{\mu}_K[b^{(1)}_{k}] \right)^2  \le 2 C_X
\end{align}
by Equation \ref{eq:theorem_b1_bound}.
Therefore, by Lemma \ref{lemma4}:
\begin{align}
	\sum_{k=1}^K \left(\hat{\mu}_K[z_k]\right)^2 =
	\sum_{k=1}^K \left( \sum_{d=1}^{D+1} a_{kd} \right)^2 
	&\le (D+1) \left( \sum_{d=1}^{D} 2 x_d^{*2} C_X + 2C_X\right) 
	\\
	&= 2(D+1)C_X \left( \sum_{d=1}^{D} x_d^{*2}  + 1\right) 
	\\
	&:= C_{X, x^*}.
	\label{eq:theorem_C_Xx}
\end{align}

We can now put all the results together to bound the second term in Equation \ref{eq:theorem_almost_there}. 
\begin{align}
	\sum_{k=1}^K \mathbb{E}_{\phipost}\left[ \psi(w^{(1)T}_k x + b^{(1)}_k ) \right]^2
	&= \sum_{k=1}^K \mathbb{E}\left[ \psi(z_k) \right]^2
	\\
	&= \sum_{k=1}^K \mathrm{erf}\left(\frac{\hat{\mu}_K[z_k]}{\sqrt{1+2\hat{\sigma}^2_K[z_k]}} \right)^2
	\label{eq:theorem_lemma1}
	\\
	&\le \sum_{k=1}^K \left(1-\exp\left(-\frac{4}{\pi} \frac{\hat{\mu}_K[z_k]^2}{1+2\hat{\sigma}^2_K[z_k]} \right) \right)
	\label{eq:theorem_lemma2}
	\\
	&\le K - \sum_{k=1}^K \exp\left(-\frac{4}{\pi} \frac{\hat{\mu}_K[z_k]^2}{1+2\hat{\sigma}^2_K[z_k]} \right)
	\\
	&\le K - \sum_{k=1}^K  \exp\left(-\frac{4}{\pi} \hat{\mu}_K[z_k]^2\right)
	\label{eq:theorem_drop_s2}
	\\
	&\le K - K \exp\left(-\frac{1}{K} \frac{4}{\pi} C_{X, x^*} \right)
	\label{eq:theorem_lemma3}
	\\
	&= 	K\left(1 - \exp\left(-\frac{1}{K} \frac{4}{\pi} C_{X, x^*} \right) \right),
\end{align}
where Equation \ref{eq:theorem_lemma1} follows from Lemma \ref{lemma1}, Equation \ref{eq:theorem_lemma2} follows from Lemma \ref{lemma2}, and Equation \ref{eq:theorem_lemma3} follows from Lemma \ref{lemma3} (using the bound in Equation \ref{eq:theorem_C_Xx}). Equation \ref{eq:theorem_drop_s2} follows because $\forall a,b\in\mathbb{R}$, one can show $\exp(-a^2/(1+b^2)) \ge \exp(-a^2)$. 

Now plug into Equation \ref{eq:theorem_almost_there}:
\begin{align}
	\left | \mathbb{E}_{\theta \sim q_{\phipost}}[f(x^*, \theta)]  \right | 
	&\le \left( \frac{2 C_X}{\cancel{K}} \right)^{1/2} \left(\cancel{K}\left(1 - \exp\left(-\frac{1}{K} \frac{4}{\pi} C_{X, x^*} \right) \right) \right)^{1/2}
	\\
	& = \left( 2 C_X \right)^{1/2} \left(1 - \exp\left(-\frac{1}{K} \frac{4}{\pi} C_{X, x^*} \right) \right)^{1/2}
	\\
	&\to 0 \text{ as } K\to\infty.
\end{align}
Lastly, since the absolute value of the posterior mean converges to zero, so does posterior mean without the absolute value. 

\end{proof}

\subsection{Additional experiments}

First we illustrate that regardless of width, the BNN priors in function space are approximately the same and resembles that of the NNGP. Figure \ref{fig:more_priors} shows the prior of BNNs of increasing width (rows) as compared to the NNGP prior (bottom row) based on 1,000 function samples from the prior. The prior settings are the same as in the main paper. The horizontal axis is the 1-dimensional input, $x$. As expected, all of the prior predictive distributions (left column) have the same mean and variance. However, this is only the first two moments of the distribution. To measure the smoothness of the function-space prior, in the right column we show a histogram of the $x$ locations at which a function sample crosses the line $y=0$, called an \textit{upcrossing}. The number of upcrossings is a measure of the inverse lengthscale (i.e., inverse smoothness). All models exhibit approximately the same number and distribution of upcrossings. 

\begin{figure}[H]
	\centering
	\includegraphics[width=1\textwidth]{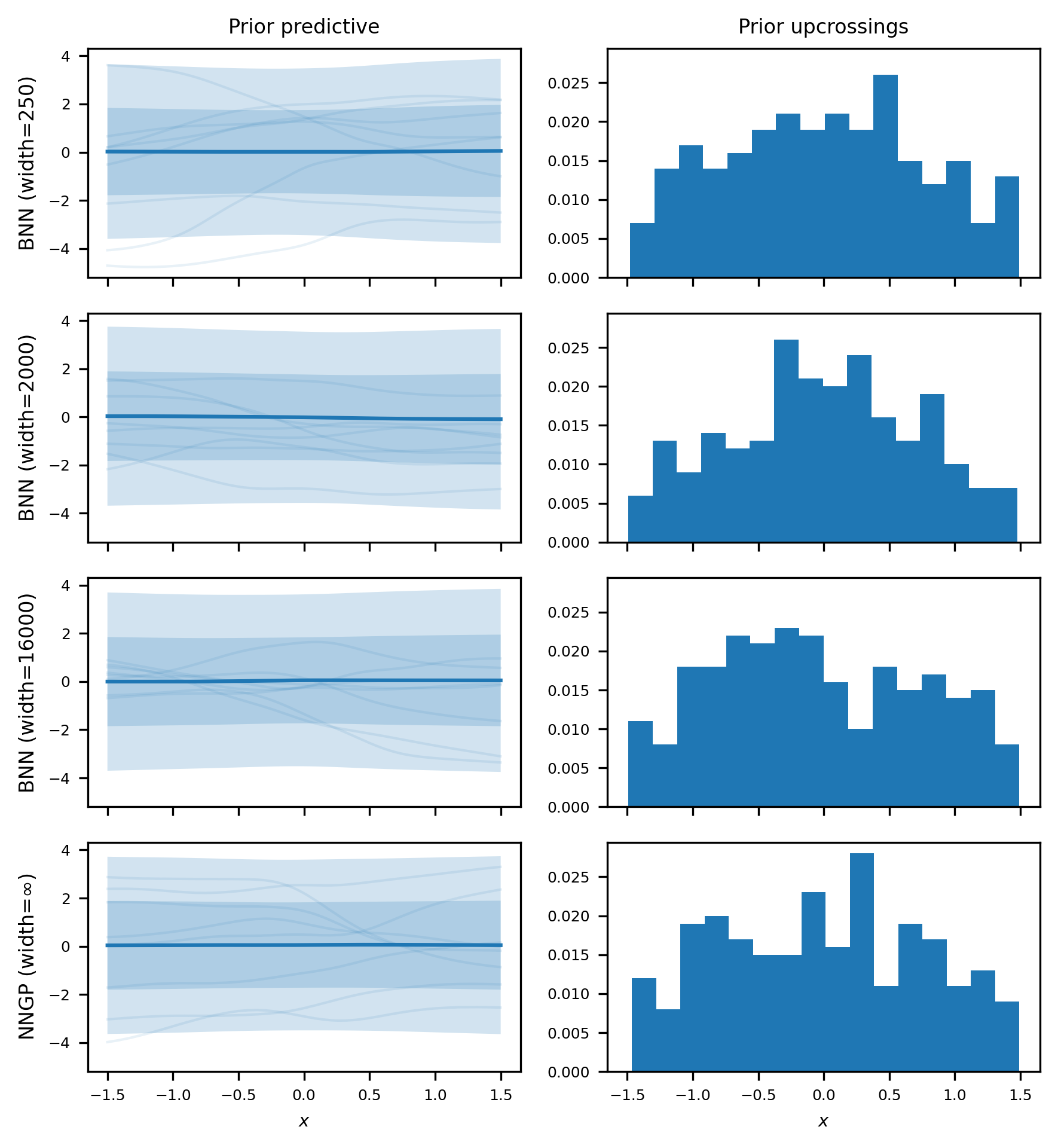}
	\vspace{-.85cm}
	\caption{Prior predictive distributions and histogram of upcrossings of $y=0$ based on 1000 prior function samples.The shaded regions constitute $\pm 1$ (darker shade) and $\pm 2$ (lighter shade) standard deviations around the means (solid lines), with a few samples drawn faintly.}
	\label{fig:more_priors}
\end{figure}

Next, using the priors in \ref{fig:more_priors} we infer the mean-field variational posteriors based on the three datasets (Figure \ref{fig:more_posteriors}). As expected from \ref{th:1}, as the width increases (rows) the mean-field variational posteriors  begin to ignore the data as they converge to zero. On the other hand, the NNGP (bottom row) fits the data nicely. Recall that while the true BNN posterior converges to the NNGP posterior as the width $K$ tends to infinity, the mean-field variational posterior does not.  

\begin{figure}[H]
	\centering
	\includegraphics[width=1\textwidth]{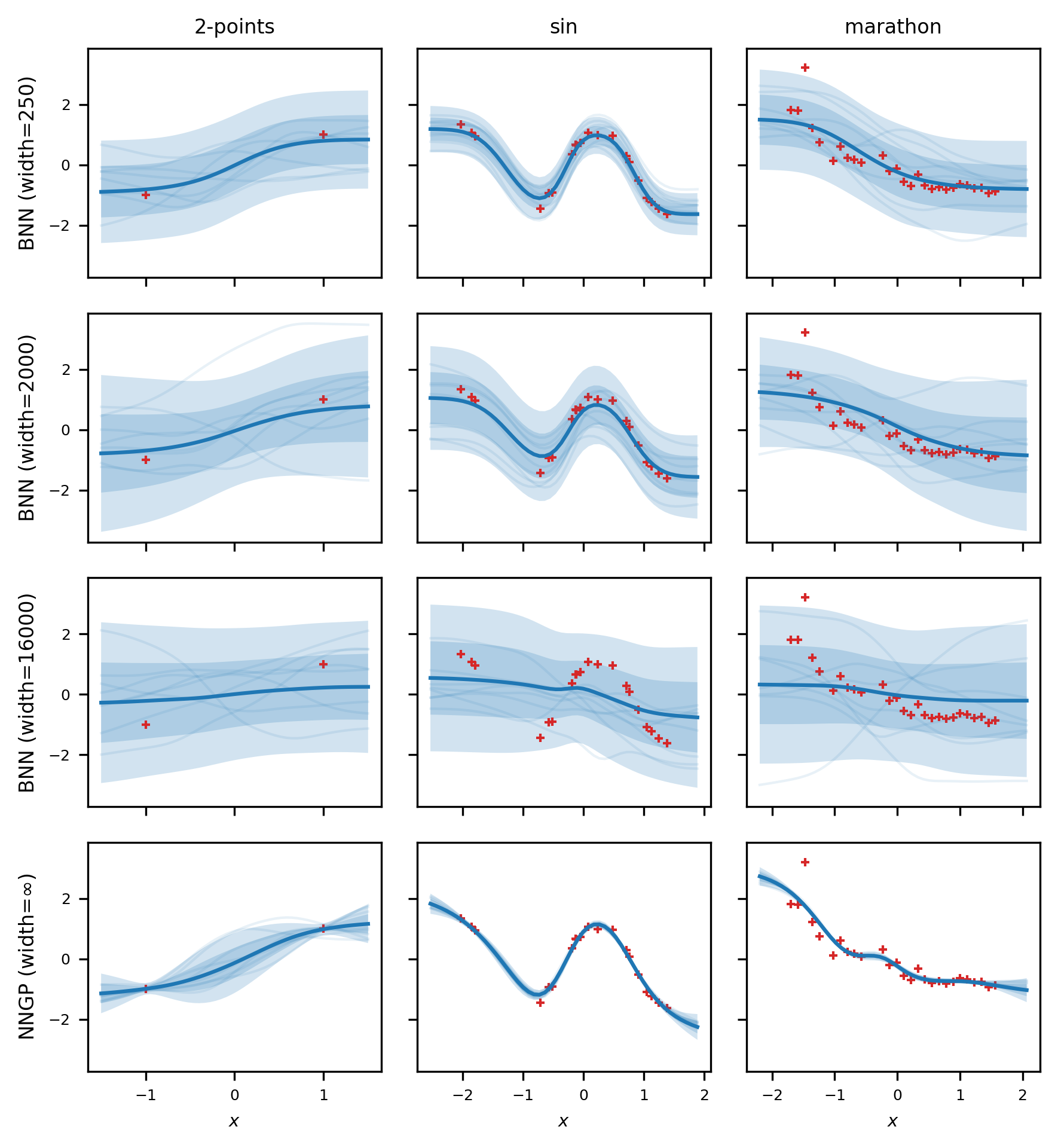}
	\vspace{-.85cm}
	\caption{Posterior predictive distributions for mean-field variational BNNs of different widths and trained on different datasets. For a large width, the mean-field variational BNN ignores the data. The shaded regions constitute $\pm 1$ (darker shade) and $\pm 2$ (lighter shade) standard deviations around the means (solid lines). All estimates are based on 1000 function samples (a few of which are drawn faintly). }
	\label{fig:more_posteriors}
\end{figure}

Figure \ref{fig:varational_params} shows the distribution of the variational parameters after optimizing the ELBO. We use the ``2 points'' dataset. To be clear, this is a kernel density estimate of the trained variational parameters themselves across hidden units of the network, not the variational distributions that the variational parameters define. Going from left-to-right, as the network width $K$ increases, the variational parameters move closer to the $\mathcal{N}(0,1)$ prior. Note that the prior variance of the output-layer weights is still effectively scaled by $1/K$ but in our implementation the scaling is performed in the function evaluation, which enables all parameters to have a $\mathcal{N}(0,1)$ prior. 

\begin{figure}[H]
	\centering
	\begin{subfigure}[b]{0.33\textwidth}
		\centering
		\includegraphics[width=1\textwidth]{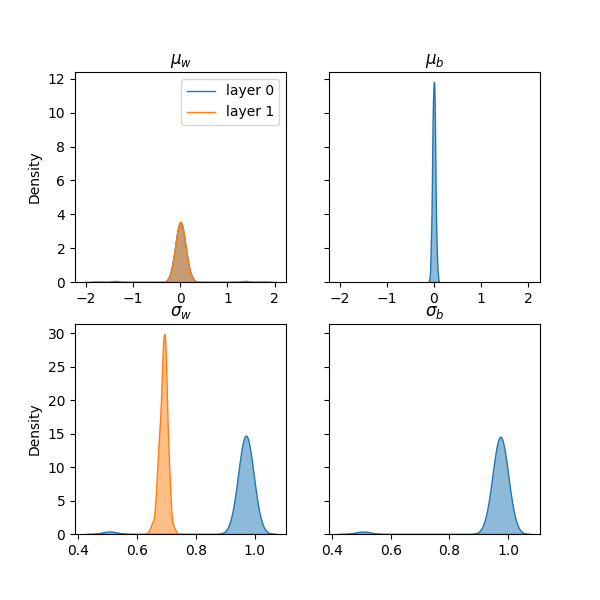}
		\caption{$\text{width} = 250$}
	\end{subfigure}
	\begin{subfigure}[b]{0.33\textwidth}
		\centering
		\includegraphics[width=1\textwidth]{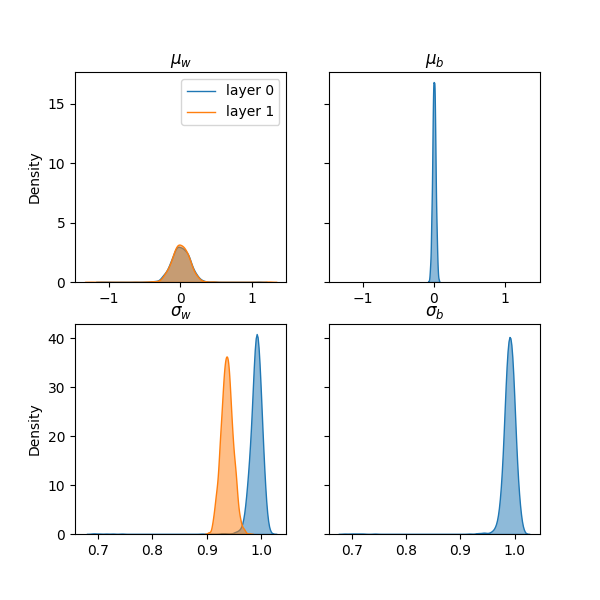}
		\caption{$\text{width} = 2000$}
	\end{subfigure}
	\begin{subfigure}[b]{0.33\textwidth}
		\centering
		\includegraphics[width=1\textwidth]{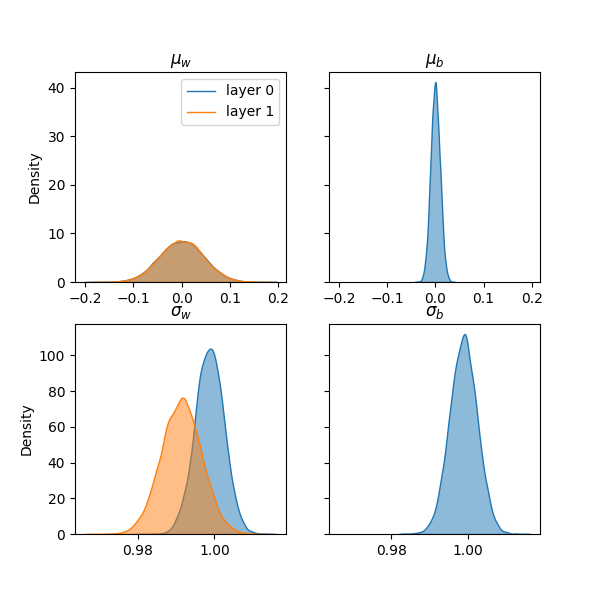}
		\caption{$\text{width} = 16,000$}
\end{subfigure}
	\caption{Kernel density estimates of the distribution of trained variational parameters using the ``2-points'' dataset. We break it down by layer, parameter type (weight or bias), and variational parameter type (mean or variance). In our implementation, since we scale by the prior parameters in the function evaluation (i.e., we use a ``neural tangent kernel'' scaling) all variational parameters have a $\mathcal{N}(0,1)$ prior distribution. Notice the trained variational variance parameters (bottom panels) shift closer the prior value of 1 as the network size increases. For the largest network, all variational parameters are near their prior values.}
	\label{fig:varational_params}
\end{figure}

In all variational inference experiments, we use 20,000 epochs of full-batch gradient descent with a learning rate of 0.001 and a momentum of 0.9 optimization. We use gradient clipping and cosine annealing of the learning rate, with warm restarts every 500 epochs \cite{loshchilov_2017}. To evaluate the ELBO, we use the analytical form of the KL divergence and the reparameterization trick \cite{kingma_2014} with 64 samples to approximate the expected log  likelihood term. 

\end{document}